\title{When to Reset Your Keys: Optimal Timing of Security Updates via Learning}
\title{When to Reset Your Keys: Optimal Timing of Security Updates via Learning}
\author{Zizhan Zheng\\
Department of Computer Science\\
Tulane University
\And Ness B. Shroff\\
Dept. of ECE and CSE\\
The Ohio State University
\And Prasant Mohapatra\\
Department of Computer Science\\
University of California, Davis
}
\newcommand{\algmargin}{\the\ALG@thistlm}
\newlength{\whilewidth}
\algnewcommand{\parState}[1]{\State%
  \parbox[t]{\dimexpr\linewidth-\algmargin}{\strut #1\strut}}
\newcommand{\ignore}[1]{}
\newtheorem{theorem}{Theorem}
\newtheorem{lemma}{Lemma}
\newtheorem{remark}{Remark}
\newif\iftp \tptrue
\begin{document}

\maketitle

\begin{abstract}
Cybersecurity is increasingly threatened by advanced and persistent attacks. As these attacks are often designed to disable a system (or a critical resource, e.g., a user account) repeatedly, it is crucial for the defender to keep updating its security measures to strike a balance between the risk of being compromised and the cost of security updates. Moreover, these decisions often need to be made with limited and delayed feedback due to the stealthy nature of advanced attacks. In addition to
targeted attacks, such an optimal timing policy under incomplete information has broad applications in cybersecurity. Examples include key rotation, password change, application of patches, and virtual machine refreshing. However, rigorous studies of optimal timing are rare. Further, existing solutions typically rely on a pre-defined attack model that is known to the defender, which is often not the case in practice. In this work, we make an initial effort towards achieving optimal timing of security updates in the face of unknown stealthy attacks. We consider a variant of the influential FlipIt game model with asymmetric feedback and unknown attack time distribution, which provides a general model to consecutive security updates.
The defender's problem is then modeled as a time associative bandit problem with dependent arms. We derive upper confidence bound based learning policies that achieve low regret compared with optimal periodic defense strategies that can only be derived when attack time distributions are known.
\end{abstract}

\section{Introduction}
Malicious attacks are constantly evolving to inflict increasing levels of damage on the nation's infrastructure systems, cooperate IT systems, and our digital lives. For example, the Advanced Persistent Threat (APT) has become a major concern to cybersecurity in the past few years. APT attacks exhibit two distinguishing behavior patterns~\cite{flipit} that make them extremely difficult to defend using traditional techniques. First, these attacks are often funded well and persistent. They attack a target system (or a critical resource) {\it periodically} with the goal to compromise it {\it completely} e.g., by stealing full cryptography keys. Second, the attacks can be highly adaptive. In particular, they often act {\it covertly}, e.g., by operating in a ``low-and-slow'' fashion~\cite{graph-apt}, to avoid immediate detection and obtain long-term advantages. 


From the defender's perspective, an effective way to thwart continuous and stealthy attacks is to update its security measures periodically to strike a balance between the risk of being compromised and the cost of updates. The primary challenge, however, is that such decisions must often be made with limited and delayed feedback because of the covert nature of the attacker. In addition to thwarting targeted attacks, such an {\it optimal timing} problem with incomplete information is crucial in various cybersecurity scenarios, e.g., key rotation~\cite{flipit}, password changes~\cite{Yue-2016}, application of patches~\cite{patch-usenix}, and virtual machine refreshing~\cite{flipit-patent}. 
For example, Facebook receives approximately 600,000 ``compromised logins'' from impostors every day~\cite{facebook-account}. An efficient approach to stop these attacks is to ask users to update their passwords when the risk of attack is high. 

Although time-related tactical security choices have been studied since the cold war era~\cite{games-of-timing}, rigorous study of timing decisions in the face of continuous and stealthy attacks is relatively new. In 2012, in response to an APT attack on it, the RSA lab proposed the FlitIt game, which was one of the first models to study timing decisions under stealthy takeovers. The FlipIt game model abstracts out details about concrete attack and defense operations by focusing on the stealthy and persistent nature of players. The basic model considers two players, each of whom can ``flip'' the state of a system periodically at any time with a cost. A player only learns the system state when she moves herself. The payoff of a player is defined as the fraction of time when the resource is under its control less the total cost incurred. 

The FlipIt game captures the stealthy behavior of players in an elegant way by allowing various types of feedback structures. In the basic model where neither player gets any feedback during the game and each move flips the state of the resource instantaneously, it is known that periodic strategies with random starting phases form a pair of best response strategies~\cite{flipit}. As a variant of the basic model, an asymmetric setting is studied in~\cite{asymmetric-model} where the defender gets no feedback during the game while the attacker obtains immediate feedback after each defense but incurs a random attack time to take over the resource. In this setting, it is shown in~\cite{asymmetric-model} that periodic defense and immediate attack (or no attack) form a pair of best response strategies. However, little is known beyond these two cases. In particular, designing adaptive defense strategies with partial feedback remains an open problem.

Although the FlipIt game provides a proper framework to understand the strategic behavior of stealthy takeover, it relies on detailed prior knowledge about the attacker. In particular, 
it requires parameters such as the amount of time needed to compromise a resource and the unit cost of each attack (or their distributions) to be fixed and known to the defender so that the equilibrium solution can be derived. These parameters limit the scope of the attack model, 
which, however, can be hard to verify before the game starts. To address this fundamental limitation, we propose to study
online learning algorithms that make minimum assumptions about the attacker and learn an optimal defense strategy from the limited feedback obtained during the game. Given the advances in big data analytics and their applications in cybersecurity, it is feasible for the defender to obtain partial feedback even under stealthy attacks. 
Such a learning approach makes it possible to derive adaptive and robust defense strategies against {\it unknown attacks} where the type of the attacker is derived from a fixed but unknown distribution, as well as the more challenging {\it dynamic attacks} where the type of the attacker can arbitrarily vary over time.


In this work, we make a first effort towards achieving optimal timing of security updates in the face of unknown stealthy attacks. We consider a variant of FlipIt game with asymmetric feedback similar to~\cite{asymmetric-model}, but with two key differences. 
First, we consider repeated unknown attacks with attacker's type sampled from an unknown distribution. Second, we assume that the defender obtains limited feedback about potential attacks at the end of each period. 
The defender's goal is to minimize the long-term cumulative loss. Our objective is to derive an adaptive defense policy that has a low regret compared with the optimal periodic defense policy when the attack time distribution is known. A key observation is that the set of defense periods that the defender can choose from are dependent in the sense that the loss from one defense period may reveal the potential loss from other periods, especially shorter ones. Moreover, two defense policies played for the same number of rounds may span different lengths of time, which has to be taken into account when comparing the policies. In this paper, we model the defender's problem as a time associate stochastic bandit problem with dependent arms, where each arm corresponds to one possible defense period. We derive optimal defense strategies for both the finite-armed bandit setting where the defense periods can only take a finite set of values, and the continuum-armed bandit setting where the defense periods can take any values from a non-empty interval.

Our main contributions can be summarized as follows.

\begin{itemize}
  \item We propose a stochastic time associative bandit model for optimal timing of security updates in the face of unknown attacks. Our model captures both the limited feedback about stealthy attacks and the dependence between different defense options. 
  \item We derive upper confidence bound (UCB) based policies for time associative bandits with dependent arms. Our policies achieve a regret of $O\left(\log(T(K+1))+K\right)$ for the finite arm case, where $T$ is the number of rounds played and $K$ is the number of arms, and a regret of $O(T^{2/3})$ for the continuous arm setting. 
\end{itemize}

Our learning model and algorithms are built upon the assumption that the defender can learn from frequent system compromises. This is reasonable for many online systems such as large online social networks and content providers and large public clouds, in which many customers are subject to similar attacks. In this setting, even if a single user is compromised occasionally, the system administrator can pool data collected from multiple users to obtain a reliable estimate quickly. For example, given the large number of attacks towards its users, Facebook can collect data from thousands of incidents of similar compromises in a short time. Our online learning algorithms can be used by Facebook to alert users to update their passwords when necessary.

\section{Related Work}
Time-related tactical security choices have been studied since the cold war era~\cite{games-of-timing}. However, the study of timing decisions in the face of continuous and stealthy attacks is relatively new. In particular, the FlipIt game~\cite{flipit} and its variants~\cite{asymmetric-model,FlipThem} are among the few models that study this problem in a rigorous way. However, all of these models assume that the parameters about the attacker are known to the defender at the beginning of the game. A gradient-based Bayesian learning algorithm was recently proposed in~\cite{Yue-2016} for a setting similar to ours, where the failure time was assumed to follow a Weibull distribution with one unknown parameter. In contrast, we consider a general attack time distribution.

Multi-armed bandit problems have been extensively studied for both the stochastic setting and the adversarial setting~\cite{Bubeck-bandit-survey}. Many variants of bandit models have been considered including bandits with side observations~\cite{side-stochastic-12,Swapna-Sigmetrics2014}. In the context of cybersecurity, bandit models have been applied to anomaly detection~\cite{probing-13} and stackelberg security games~\cite{online-learning-SS}. However, the only previous work that studies the time associative bandit model is ~\cite{continuous-time-bandit}, where the arms are assumed to be mutually independent. In contrast, we propose to model the optimal timing problem in cybersecurity as a time associative bandit problem with dependent arms and study algorithms that can exploit side-observations to improve performance.

\section{Model}\label{sec:model}

We consider the following variant of the FlipIt game~\cite{flipit} with two players, a defender and an attacker, and a security sensitive resource to protect. The attacker is persistent in compromising the resource. In response, the defender updates its security measures, e.g., keys, passwords, etc., from time to time to thwart the attacker. Assume a continuous time horizon. At any time instance, either the defender or the attacker can make a move to take over the resource at some cost. At time $\tau$, the resource is under the control of the player that makes the last move before $\tau$. Let $\tau_t, t = 1, 2, ...$ denote the time instance of the $t$-th defense action, and $x_t = \tau_{t+1} - \tau_t$ the $t$-th defense period. We assume $\tau_1 = 0$ without loss of generality. Each defense action (move) incurs a fixed cost $C_D$, which is known to the defender. Let $X$ denote the set of all possible defense periods. We assume that $X \subseteq [x_{\min}, x_{\max}]$ with $x_{\min}>0$. See Figure~\ref{Fig.model} for an example.


We further make the following assumptions about the game: (1) The attack in round $t$ takes a random time $a_t$ to succeed, which is $i.i.d.$ sampled from a distribution $F_a$ that is initially unknown to the defender. In contrast, the defender recaptures the resource immediately once it makes a move, which is a reasonable assumption as it is usually much more time consuming to compromise a resource than updating its security measures. We may also interpret $a_t$ as the awareness time that the attacker takes to discover a new vulnerability in the system. We assume that $a_t$ is out of the control of the attacker but its distribution is known to the attacker. The attacker does not know the value of $a_t$ until it successfully compromises the system in round $t$. (2) Whenever the defender makes a move, this fact is learned by the attacker immediately. 
On the other hand, the defender has delayed and incomplete feedback in the following sense. First, the defender only gets feedback at the end of each round. Second, at time $\tau_{t+1}$, the defender learns the value of $a_t$ if $a_t < x_t$, that is only when an attack is observed.
Thus, the game has asymmetric feedback, a common scenario in cybersecurity. (3) The attacker is myopic and does not have a move cost. Therefore, it always attacks immediately right after a security update. Our model and solutions can be extended to the case when the attacker is myopic but with a hidden move cost \iftp(see the Appendix).\else (see~\cite{technical-report}).\fi


\begin{figure}[t]
\centering
\includegraphics[width=3.2in]{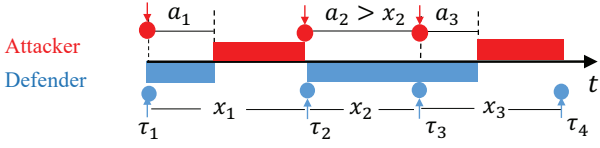}
\caption{\small An example of the proposed game model. Blue circles and red circles
represent the defender's and the attacker's actions, respectively. A blue
segment denotes an interval where the resource is under protection, and a red
segment denotes an interval where the resource is compromised.}
\label{Fig.model}
\end{figure}

Under the above assumptions, in each round $t$, the resource is fully protected if $a_t \geq x_t$ and is compromised for a duration of $x_t-a_t$ otherwise. The loss to the defender in round $t$ is then defined as:
\begin{align}
l(x_t,a_t) = f[(x_t-a_t)^+]+c_d \label{eq:loss1}
\end{align}
where $f(\cdot)$ models the loss from attack and $c_d$ models the cost of each defense action. We assume that $f(\cdot) \in [0,1]$, $f(0) = 0$, and $f(\cdot)$ is increasing. 
For instance, we can consider (1) a binary loss function where $l(x_t,a_t) = 1+ c_d$ if $x_t > a_t$ and $l(x_t,a_t) = c_d$ otherwise; or (2)
a linear loss function $l(x_t,a_t) = \frac{(x_t-a_t)^+}{x_{\max}} +c_d$ (where the $x_{\max}$ factor is introduced to normalize the loss value). 

The defender's objective is to minimize the long-term average loss defined as follows:
\begin{align}
\lambda^u = \limsup_{T \rightarrow \infty} \frac{\mathbb{E}(\sum^{T}_{t=1}l(x^u_t,a_t))}{\sum^T_{t=1}x^u_t}
\end{align}
where $u$ denotes any defense policy and $x^u_t$ is the $t$-th defense period chosen by policy $u$. Let $l(x) = \mathbb{E}_{a_1}(l(x,a_1))$ denote the expected loss of defense period $x$, and let $\lambda(x) = \frac{l(x)}{x}$ denote the time average loss of a {\it periodic} policy with period $x$. We make two observations: (1) defending all the time is not necessarily a good option as it may incur a very high defense cost; (2) it can be shown that the periodic defense policy with period $x^* = \min_{x \in X} \lambda(x)$ minimizes the long-term time average loss~\cite{Puterman-MDP,continuous-time-bandit}.
However, this optimal policy cannot be found when the distribution of $a_t$ is unknown. Let $\lambda^* = \lambda(x^*)$ denote the optimal loss. To find an optimal defense policy when the distribution of $a_t$ is unknown, we adopt the time associative bandit model~\cite{continuous-time-bandit} by considering each defense period as an arm. For a defense policy $\{x_t\}$, the (pseudo) regret for the first $T$ rounds with respect to the optimal periodic policy can be defined as:
\begin{align}
\overline{R}_T &= \max_{x \in X} \mathbb{E}\left[\sum^{T}_{t=1}l(x_t,a_t) - \lambda(x) x_t\right]\\
&= \sum^{T}_{t=1}l(x_t)- \lambda^* \sum^T_{t=1}x_t
\end{align}

Our objective is to find a defense policy with low regret. Note that any learning algorithm with $\lim_{T \rightarrow \infty} \frac{\overline{R}_T}{T} = 0$ minimizes the long-term loss as $T \rightarrow \infty$. We also note that even if $l(x_t,a_t)$ as a function of $x_t$ (for a fixed $a_t$) has a simple structure, the mean loss function $l(x_t) \triangleq E_{a_t}(l(x_t,a_t))$ may have a complicated form depending on the distribution of $a_t$. Therefore, previous works on linear and convex bandits cannot be directly applied to our problem. On the other hand, we observe that the defender may obtain side-observations during the game, which can be utilized to design more efficient learning algorithms. 


\vspace{1ex}
\noindent{\bf Side observations:} As we discussed before, the defender learns the value of $a_t$ if $a_t < x_t$ (hence its loss as well) at the end of each round. From this feedback, the defender may get side observations in the following sense. 
Consider any round $t$. If $a_t < x_t$, 
then the defender learns the value of $a_t$; therefore, it learns $l(x_i,a_t)$ for any $x_i \in X$ if it has played $x_i$ instead of $x_t$. On the other hand, if $a_t \geq x_t$, the defender only learns the value of $l(x_i,a_t) = c_d$ for any $x_i \leq x_t$, but not the value of $l(x_j,a_t)$ for $x_j > x_t$.
This implies that playing an arm that corresponds to a longer defense period provides more side observations about other arms. 
Our learning algorithm incorporates these side-observations to minimize the expected regret. Indeed, our algorithm and its regret bound apply to any loss function $l(x_t,a_t)$ where playing one period provides side-observations to all shorter periods. 

\vspace{1ex}
\noindent{\bf Multiple resources:} Our model can be readily extended to consider multiple resources (nodes) subject to $i.i.d.$ attacks, which can be used to model multiple users subject to independent attacks in an online system such as Facebook. In this case, samples from multiple nodes can be pooled together when choosing the next defense period for a node. Consider a system with $N$ nodes that are subject to $i.i.d.$ attacks with unknown attack times sampled from $F_a$. Let $\tau_{st}$ denote the time instance of the $t$-th security update on node $s$ and $x_{st} = \tau_{s(t+1)}-\tau_{st}$ the $t$-th defense period for node $s$. Note that $x_{st}$ can be different for different $s$. Let $a_{st}$ denote the attack time in the $t$-th attack towards node $s$. Let $l(x_{st},a_{st})$ denote the loss to the defender in round $t$ over node $s$. When the nodes are subject to $i.i.d.$ attacks, there is an optimal defense period $x^*$ for all nodes with minimum time average loss $\lambda^*$, similar to the single node setting. The $i.i.d.$ assumption may hold in practice because (1) some parameters such as the attack time may be out of the control of the attacker and can be approximated as $i.i.d.$ random variables during the time horizon of the game; (2) an adversarial attacker may choose to avoid correlated attacks to make its behavior more unpredictable. Assume that the game is played for $T_s$ rounds over node $s$, and let $T = \sum_s T_s$. Then the regret over the $T$ rounds of play across all the nodes can be defined as $\overline{R}_T = \sum^N_{s=1} \sum^{T_s}_{t=1}l(x_{st})- \lambda^* \sum^N_{s=1}\sum^{T_s}_{t=1}x_{st}$. Note that, when choosing $x_{st}$, feedback from all the nodes received before $\tau_{st}$ can be used. Our online learning algorithms can be directly applied to this setting. 

\section{Optimal Timing Algorithms}
In this section, we present our learning algorithms for the optimal timing problem for both discrete and continuous defense periods. 

\subsection{Discrete Defense Periods}
We first consider the finite-armed setting where the set of defense periods is finite, denoted by $X = \{x_1,...,x_K\}$. 
Let $i_t$ denote the index of the arm played in round $t$, i.e., $x_{i(t)}$ is the defense period chosen for round $t$. Let $n_i(t) = \sum^t_{s=1}\mathbb{I}(i_s = i)$ denote the number of plays of arm $i$ during the first $t$ rounds. Let $\overline{l}_{i,t} = \frac{1}{n_i(t)} \sum^t_{s=1}\mathbb{I}(i_s = i)l(x_{i(s)},a_s)$ denote the average loss from arm $i$ during the first $t$ rounds, and $\overline{\lambda}_{i,t} = \frac{\overline{l}_{i,t}}{x_i}$ the time average loss of arm $i$. To simplify the notation, we omit the subscript $t$ in $\overline{l}_{i,t}$ and $\overline{\lambda}_{i,t}$ when it is clear from the context, and let $l_i \triangleq l(x_i)$ and $\lambda_i \triangleq \lambda(x_i)$. Let $\Delta_i = l_i - x_i \lambda^*$ denote the {\it relative} loss of playing arm $i$. Note that $\Delta_{i^*} = 0$ for an optimal arm $i^*$ and $\Delta_i \in [0,1]$ by our assumption about $l$. Then $\overline{R}_T=\sum_{i \neq i^*} \Delta_i \mathbb{E}(n_i(T))$. Let $\Delta_{\min} \triangleq \min_{i:\Delta_i>0} \Delta_i$ and $\Delta_{\max} \triangleq \max_i \Delta_i$.


\begin{algorithm}[t]
\caption{Improved UCB algorithm for time-associative bandits with side observations}\label{alg:stochastic-finite} 
\begin{algorithmic}
\State {\bf Input:} A set of periods $X$, the number of rounds $T$.
\State {\bf Initialization:} Set $\tilde{\Delta}_0 = 1, X_0 = X$.
\For{$m = 0,1,2,...,$}
\State $x_{(1)} = \min \{x_i \in X_m\}$; $x_{(2)} = \max \{x_i \in X_m\}$.
\vspace{0.5ex}
\State {\bf Arm selection:}
\State If $|X_m| = 1$, play the single period in $X_m$ until $T$.
\parState{Else play the longest period in $X_m$ until round $\min(n_m,T)$, where $n_m = \left\lceil \frac{2\gamma_m\log(T(K+1)\tilde{\Delta}^2_m)}{\tilde{\Delta}^2_m} \right\rceil$ and $\gamma_m = \left(1+\frac{x_{(2)}}{x_{(1)}}\right)^2$; update $\overline{l}_i, \overline{\lambda}_{i}$ for all $x_i \in X_m$.} 
\vspace{0.5ex}
\State {\bf Arm elimination:}
\parState{$\overline{\lambda}_m = \min_{x_i \in X_m} \left(\overline{\lambda}_{i}+c_m/x_i\right)$ where $c_m = \sqrt{\frac{\log(T(K+1)\tilde{\Delta}^2_m)}{2n_m}}$.}
\parState{To get $X_{m+1}$, delate all the periods $x_i \in X_m$ such that $\overline{l}_{i}-x_i\overline{\lambda}_{m} \geq \min_{x_j \in X_m}\overline{l}_{j}-x_j\overline{\lambda}_{m}+2\left(1+\frac{x_j}{x_{(1)}}\right)c_m$.}
\vspace{0.5ex}
\State {\bf Reset:} $\tilde{\Delta}_{m+1} = \frac{\tilde{\Delta}_m}{2}$.
\EndFor
\end{algorithmic}
\end{algorithm}

To derive an optimal defense policy, 
we consider the following variant of the improved upper confidence bound based policy proposed in~\cite{improved-UCB} for stochastic bandits. We modify the improved UCB policy to address the time associative regret while taking the dependence between arms into account. 

The algorithm proceeds in multiple stages, where each stage involves multiple rounds (see Algorithm~\ref{alg:stochastic-finite}). In each stage $m$, as in the improved UCB policy, our policy estimates $\Delta_i$ by a value $\tilde{\Delta}_m$, and maintains a set of active arms $X_m$. $\tilde{\Delta}_0$ is initialized to 1 and is halved in each stage. $X_0$ initially contains all the arms. At the end of each stage $m$, a subset of arms are deleted from $X_m$ according to their observed losses in previous rounds. Compared with the improved UCB policy for stochastic bandits, our policy has several key differences. First, in the arm selection phase, each active arm is played $n_m-n_{m-1}$ times in stage $m$ in the improved UCB policy, where $n_m$ is a function of $\tilde{\Delta}_m$ and is chosen so that any suboptimal arm $i$ is eliminated as soon as $\tilde{\Delta}_m < \frac{\Delta_i}{2}$ with high probability. In contrast, only the longest period in $X_m$ is played $n_m-n_{m-1}$ times in our policy, which provides side observations to all the shorter periods as we discussed above. For any arm $x_i \in X_m$, $\overline{l}_i$ is defined as if $i$ is played in all the previous $n_m$ rounds. In addition, the definition of $n_m$ in our policy is different from the improved UCB policy. In particular, $n_m$ depends on the ratio of the maximum active period to the minimum active period, which is needed to bound the time associative regret. Second, in the arm elimination phase, we compare the relative losses of arms instead of average losses as in the improved UCB, since average loss alone does not take the length of a defense period into account. In particular, we estimate the relative loss of arm $i$ by $\overline{l}_{i,n_m}-x_i\overline{\lambda}_{m}$, where $\overline{\lambda}_{m}$ is an estimate of $\lambda^*$ defined by
\begin{align}
\overline{\lambda}_m = \min_{x_i \in X_m} \left(\overline{\lambda}_{i,n_m}+c_m/x_i\right)
\end{align}
where $c_m = \sqrt{\frac{\log(T(K+1)\tilde{\Delta}^2_m)}{2n_m}}$. 
The value of $c_m$ is chosen so that for all $i$, $\overline{\lambda}_{i,n_m}$ is in the $c_m/x_i$-vicinity of $\lambda_i$ with high probability.


To establish the regret bound of the algorithm, we need the following lemmas.

\begin{lemma}({\it Chernoff-Hoeffding Bound~\cite{Hoeffding-inequalities}})
Let $X_1, X_2,...,X_n$ be a sequence of independent random variables with support $[a,b]$ and $E(X_t) = \mu$ for all $X_t$. Let $\overline{X}_n = \frac{1}{n}\sum^n_{t=1}X_t$. Then for any $\epsilon>0$, we have
\begin{align*}
&\mathbb{P}\{\overline{X}_n \geq \mu + \epsilon\} \leq e^{-\frac{2n\epsilon^2}{(b-a)^2}},\\
&\mathbb{P}\{\overline{X}_n \leq \mu - \epsilon\} \leq e^{-\frac{2n\epsilon^2}{(b-a)^2}}.\\
\end{align*}
\end{lemma}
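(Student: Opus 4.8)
The statement is Hoeffding's inequality, a classical result; the plan is to reproduce the standard Chernoff-bounding argument. Since only the centered variables $Y_t := X_t - \mu$ enter, I would work with those: they are independent, have mean $0$, and take values in an interval of the same width $w := b - a$ (namely $[a - \mu, b - \mu]$), while $\sum_t Y_t = n(\overline{X}_n - \mu)$.

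First I would establish the upper tail. For any fixed $s > 0$, monotonicity of $x \mapsto e^{sx}$, Markov's inequality, and independence give
\begin{align*}
\mathbb{P}\{\overline{X}_n \geq \mu + \epsilon\} &= \mathbb{P}\Big\{e^{s\sum_{t=1}^n Y_t} \geq e^{sn\epsilon}\Big\} \leq e^{-sn\epsilon}\,\mathbb{E}\Big[e^{s\sum_{t=1}^n Y_t}\Big] \\
&= e^{-sn\epsilon}\prod_{t=1}^n \mathbb{E}\big[e^{sY_t}\big].
\end{align*}
Next I would invoke Hoeffding's lemma, $\mathbb{E}[e^{sY}] \le e^{s^2 w^2/8}$ for any zero-mean $Y$ supported on an interval of width $w$, so the right-hand side is at most $e^{-sn\epsilon + ns^2 w^2/8}$. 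Minimizing the exponent over $s>0$ at $s = 4\epsilon/w^2$ yields the bound $e^{-2n\epsilon^2/w^2}$, which is exactly the claim. The lower-tail inequality then follows by applying this to $-X_1,\dots,-X_n$, whose mean is $-\mu$, whose support width is still $w$, and whose average is $-\overline{X}_n$.

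The only non-routine ingredient — and the step I expect to require actual work — is Hoeffding's lemma itself. To prove it I would use convexity of $y \mapsto e^{sy}$ to bound it on $[a-\mu,\,b-\mu]$ by its secant line through the endpoints, take expectations using $\mathbb{E}[Y]=0$, and reduce the claim to showing $\psi(u) := -pu + \log(1 - p + p e^{u}) \le u^2/8$, where $u := s w$ and $p := (\mu - a)/w \in [0,1]$. This inequality follows from $\psi(0) = \psi'(0) = 0$ together with $\psi''(u) = q(1-q) \le 1/4$ (a Bernoulli-variance bound, with $q := p e^u/(1-p+pe^u)$), via a second-order Taylor expansion with Lagrange remainder. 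Everything outside this lemma is mechanical.
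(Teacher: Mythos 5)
Your proposal is correct. The paper does not prove this lemma at all --- it is stated as a citation of Hoeffding's classical result --- so there is no in-paper argument to compare against; what you give is the standard Chernoff-bounding derivation (Markov's inequality applied to $e^{s\sum_t Y_t}$, factorization by independence, Hoeffding's lemma $\mathbb{E}[e^{sY}]\le e^{s^2w^2/8}$, optimization at $s=4\epsilon/w^2$, and the lower tail by negation), and every step, including your sketch of Hoeffding's lemma via the secant bound and the estimate $\psi''(u)=q(1-q)\le 1/4$, is sound. No gaps.
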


\begin{lemma}\label{lem:1}
Consider any stage $m$ where there is an optimal arm $i^* \in X_m$. If $l_i \leq \overline{l}_{i} + c_m$ for all $x_i \in X_m$ and $l^* \geq \overline{l}_{i^*} - c_m$, then we must have $\lambda^* \leq \overline{\lambda}_m \leq \lambda^* +2c_m/x_{i^*}$.
\end{lemma}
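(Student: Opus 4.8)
Both inequalities follow by directly unwinding the definition $\overline{\lambda}_m = \min_{x_i \in X_m}\bigl(\overline{\lambda}_{i} + c_m/x_i\bigr)$ together with the two hypotheses, using the two facts about the optimal arm: $\lambda^* = l^*/x_{i^*}$ and $\lambda^* = \min_i \lambda_i$. There is no real obstacle here beyond careful bookkeeping of the directions of the inequalities and the role of the $c_m/x_i$ offset.

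For the lower bound, fix any $x_i \in X_m$. The hypothesis $l_i \le \overline{l}_i + c_m$ rearranges to $\overline{l}_i \ge l_i - c_m$, so dividing by $x_i$ gives $\overline{\lambda}_i \ge \lambda_i - c_m/x_i$, i.e. $\overline{\lambda}_i + c_m/x_i \ge \lambda_i \ge \lambda^*$, where the last step uses $\lambda^* = \min_j \lambda_j$. Since this holds for every active arm, taking the minimum over $x_i \in X_m$ yields $\overline{\lambda}_m \ge \lambda^*$.

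For the upper bound, I would use that $i^* \in X_m$, so $i^*$ is among the arms in the minimum defining $\overline{\lambda}_m$; hence $\overline{\lambda}_m \le \overline{\lambda}_{i^*} + c_m/x_{i^*}$. Now apply the second hypothesis $l^* \ge \overline{l}_{i^*} - c_m$, which gives $\overline{l}_{i^*} \le l^* + c_m$; dividing by $x_{i^*}$ and using $\lambda^* = l^*/x_{i^*}$ gives $\overline{\lambda}_{i^*} \le \lambda^* + c_m/x_{i^*}$. Combining the two displays, $\overline{\lambda}_m \le \lambda^* + c_m/x_{i^*} + c_m/x_{i^*} = \lambda^* + 2c_m/x_{i^*}$, as claimed.

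The only point worth flagging is that the offset $c_m/x_{i^*}$ is incurred twice in the upper bound: once because the minimand evaluated at $i^*$ already carries the additive term $c_m/x_{i^*}$, and once from controlling the deviation of $\overline{\lambda}_{i^*}$ from $\lambda^*$; this is exactly what produces the factor $2$ in the statement, and it is why the definition of $\overline{\lambda}_m$ is chosen with precisely this per-arm offset. Everything else is a one-line rearrangement, so I would not expect any difficulty in the formal write-up.
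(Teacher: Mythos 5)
Your proof is correct and follows essentially the same route as the paper's: the lower bound comes from each term $\overline{\lambda}_i + c_m/x_i$ dominating $\lambda_i \geq \lambda^*$, and the upper bound from evaluating the minimum at $i^*$ and applying the second hypothesis. The only cosmetic difference is that the paper argues via the specific minimizing arm rather than over all active arms, which changes nothing.
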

\begin{proof}
To see this, let $x_j \in X_m$ be the arm that minimizes $\overline{\lambda}_{j}$. Then we have $\lambda^* \leq \lambda_j \leq \overline{\lambda}_{j} +c_m/x_j = \overline{\lambda}_m$, and $\overline{\lambda}_m \leq \overline{\lambda}_{i^*} + c_m/x_{i^*} \leq \lambda^* + 2c_m/x_{i^*}$.
\end{proof}

We now show the following bound on the expected regret of Algorithm~\ref{alg:stochastic-finite}.
\begin{theorem}\label{thm:finite}
The expected regret of Algorithm~\ref{alg:stochastic-finite} is at most $\frac{48\gamma\log\left(T(K+1)\frac{\Delta^2_{\max}}{4}\right)}{\Delta_{\min}}+\sum_{i: \Delta_i>0}\left(\Delta_i+\frac{48}{\Delta_i}\right)$, where $\gamma = (1+\frac{x_{\max}}{x_{\min}})^2$. 
\end{theorem}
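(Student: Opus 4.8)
The plan is to follow the improved-UCB template of~\cite{improved-UCB}, adapted so that (i) the regret is period-weighted and (ii) only the longest active arm is ever pulled, the remaining arms being tracked through side observations. Fix an optimal arm $i^*$, and for a suboptimal arm $j$ let $m_j$ be the smallest stage with $\tilde{\Delta}_{m_j}\le\Delta_j/2$, so $\tilde{\Delta}_{m_j}\in(\Delta_j/4,\Delta_j/2]$ and $m_j\le\lceil\log_2(2/\Delta_j)\rceil$. The first step sets up the ``good event'' $G_m=\{\,|\overline{l}_{i,n_m}-l_i|\le c_m\ \text{for every}\ x_i\in X_m\,\}$. The key observation is that in each of the first $n_m$ rounds the period actually played is at least as long as every arm currently in $X_m$ (the active set only shrinks and the longest-played period only decreases), so the side-observation rule yields a fresh i.i.d.\ sample of $l(x_i,a)$ in each such round; hence $\overline{l}_{i,n_m}$ is an average of $n_m$ i.i.d.\ samples with mean $l_i$. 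Since $n_mc_m^2\ge\tfrac12\log(T(K+1)\tilde{\Delta}_m^2)$ and $l(\cdot,\cdot)$ ranges over an interval of length one, the Chernoff--Hoeffding bound and a union bound over the $\le K$ active arms give $\Prb(\neg G_m)\le 2K/\big(T(K+1)\tilde{\Delta}_m^2\big)\le 2/\big(T\tilde{\Delta}_m^2\big)$.

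The second step is an elimination lemma: if $G_0,\dots,G_m$ all hold, then no optimal arm is deleted at stage $m$, and every $j$ with $\Delta_j\ge 2\tilde{\Delta}_m$ is deleted by the end of stage $m$. (Here the elimination test for $x_i$ is read as $\overline{l}_i-x_i\overline{\lambda}_m\ge\min_{x_j\in X_m}\big(\overline{l}_j-x_j\overline{\lambda}_m+2(1+x_j/x_{(1)})c_m\big)$, with $x_{(1)},x_{(2)}$ the shortest and longest active periods.) For the former, Lemma~\ref{lem:1} gives $\overline{\lambda}_m\in[\lambda^*,\lambda^*+2c_m/x_{i^*}]$, so $\overline{l}_{i^*}-x_{i^*}\overline{\lambda}_m\le c_m$ while each term $\overline{l}_k-x_k\overline{\lambda}_m+2(1+x_k/x_{(1)})c_m$ in the threshold is $\ge c_m$ (using $x_{(1)}\le x_{i^*}$). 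For the latter, the same bounds give $\overline{l}_j-x_j\overline{\lambda}_m\ge\Delta_j-(1+2x_j/x_{i^*})c_m$ and the threshold is $\le c_m+2(1+x_{i^*}/x_{(1)})c_m$; since $x_j,x_{i^*}\le x_{(2)}$ and $x_{(1)}\le x_{i^*}$, deletion is forced once $\Delta_j\ge 4(1+x_{(2)}/x_{(1)})c_m=4\sqrt{\gamma_m}\,c_m$, and the definition of $n_m$ gives exactly $c_m\le\tilde{\Delta}_m/(2\sqrt{\gamma_m})$, so $4\sqrt{\gamma_m}\,c_m\le 2\tilde{\Delta}_m\le\Delta_j$ — this cancellation is the reason for the factor $\gamma_m=(1+x_{(2)}/x_{(1)})^2$ in $n_m$. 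A corollary is that, on $G_0\cap\dots\cap G_{m-1}$, the longest active arm $J_m$ at stage $m$ satisfies $\Delta_{J_m}<4\tilde{\Delta}_m$, equivalently $m\le m_{J_m}$.

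The third step bounds $\overline{R}_T=\sum_{j:\Delta_j>0}\Delta_j\,\mathbb{E}(n_j(T))$. Split $n_j(T)$ into rounds with $J_m=j$ at a stage $m\le m_j$ (``on-schedule'') and rounds with $J_m=j$ at a stage $m>m_j$. The latter requires $j$ to survive past stage $m_j$, which by the elimination lemma forces $\neg(G_0\cap\dots\cap G_{m_j})$, an event of probability $\le\sum_{m\le m_j}2/(T\tilde{\Delta}_m^2)=O(1/(T\Delta_j^2))$; so these rounds cost at most $T\cdot O(1/(T\Delta_j^2))=O(1/\Delta_j^2)$ in expectation, i.e.\ $O(1/\Delta_j)$ per arm after the $\Delta_j$-weighting, giving the $\sum_j 48/\Delta_j$ term. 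For the on-schedule rounds, since exactly one arm is longest-active in any round, summing over $j$ with weight $\Delta_j$ collapses to $\sum_m T_m\,\Delta_{J_m}\,\mathbb{I}(\Delta_{J_m}<4\tilde{\Delta}_m)$, where $T_m\le n_m-n_{m-1}$ and $n_m\le 1+2\gamma\log(T(K+1)\tilde{\Delta}_m^2)/\tilde{\Delta}_m^2$ using $\gamma_m\le\gamma=(1+x_{\max}/x_{\min})^2$. Using $\Delta_{J_m}<4\tilde{\Delta}_m$ and summing by parts over stages (stage $m$ adds only $n_m-n_{m-1}$ rounds and $\tilde{\Delta}_m$ is halved each stage), the resulting geometric sum is dominated by its last relevant term — the stage at which $4\tilde{\Delta}_m$ first falls below $\Delta_{\min}$ — which contributes $O\!\big(\gamma\log(T(K+1)\Delta_{\min}^2/4)/\Delta_{\min}\big)$; replacing $\Delta_{\min}$ by $\Delta_{\max}$ in the logarithm and tracking the numerical constants yields the stated bound $\tfrac{48\gamma\log(T(K+1)\Delta_{\max}^2/4)}{\Delta_{\min}}+\sum_{i:\Delta_i>0}(\Delta_i+\tfrac{48}{\Delta_i})$.

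The main obstacle is obtaining the leading term free of any factor $K$. In the plain improved-UCB analysis each of the (up to) $K$ suboptimal arms is pulled $\Theta(\log/\Delta_i^2)$ times, which would give $\sum_i\Theta(\log/\Delta_i)=O(K\log/\Delta_{\min})$. The saving here is structural: a short suboptimal arm is never pulled at all — only the longest active arm is pulled, everything else being inferred from side observations — so across all arms the suboptimal pulls nest in time, and when weighted by their gaps and by their period lengths they telescope into essentially the cost of the single arm with gap $\Delta_{\min}$. Making this rigorous while the stage lengths themselves depend on the shrinking ratio $x_{(2)}/x_{(1)}$, and while the objective is period-weighted rather than round-counted, is the delicate part; the $\sqrt{\gamma_m}$ baked into $n_m$ is precisely what keeps per-stage elimination valid against the worst-case ratio $x_{\max}/x_{\min}$ without inflating the leading constant.
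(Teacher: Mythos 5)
Your proposal is correct and follows essentially the same route as the paper's proof: the same per-stage elimination lemma built on Lemma~\ref{lem:1} and the Chernoff--Hoeffding bound (with $c_m\le\tilde{\Delta}_m/(2\sqrt{\gamma_m})$ doing exactly the work you describe), the same structural observation that only the longest active arm is ever pulled so the suboptimal plays nest in time and telescope into a single $O(\gamma\log(\cdot)/\Delta_{\min})$ leading term, and the same $O(1/\Delta_i)$ accounting for confidence-interval failures. The only difference is bookkeeping: you union-bound a per-stage good event $G_m$ over all active arms, whereas the paper partitions the sample space via the nested events $C_i=A_i\cap B_i$ and bounds the events ``arm $i$ survives past stage $m_i$'' and ``the optimal arm is eliminated'' separately --- both yield the stated bound.
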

\begin{proof}
Without loss of generality, we assume that the optimal arm is unique and has index $K$, and sort the set of arms such that $\Delta_1 \geq \Delta_2 \geq \cdots \geq \Delta_K = 0$. For any suboptimal arm $i$, let $m_i = \min\{m: \tilde{\Delta}_m < \frac{1}{2}\Delta_i\}$ denote the first stage in which $\tilde{\Delta}_m < \frac{1}{2}\Delta_i$. We have $2^{m_i} = \frac{1}{\tilde{\Delta}_{m_i}} \leq \frac{4}{\Delta_i} < \frac{1}{\tilde{\Delta}_{m_i+1}}=2^{m_i+1}$. Note that $m_1 \leq m_2 \leq \cdots \leq m_{K-1}$. 

We consider the following events similar to~\cite{MAB-covariates}. Let $A_i$ denote the event that the optimal arm has {\it not} been eliminated before stage $m_i$, and $B_i$ the event that every arm $j \in \{1,2,...,i\}$ has been eliminated in stage $m_j$ or before. We have $A_1 \supseteq A_2 \cdots \supseteq A_{K-1}$ and $B_1 \supseteq B_2 \cdots \supseteq B_{K-1}$. Let $C_i = A_i \cap B_i$ for $i \in \{1,2,...,K-1\}$. Under the event $C_i$, let $U_i$ denote the contribution to the regret from arms $\{1,2,...,i\}$ and $V_i$ the contribution to the regret from arms $\{i+1,...,K-1\}$. We observe that $V_i \leq T\Delta_{i+1}$. 
Let $C_0$ denote the sample space. We then have
\begin{align}
\overline{R}_T &= \sum^{K-1}_{i=1}\mathbb{P}(C_{i-1}\backslash C_i) (U_i+V_i) \nonumber\\
&\leq \sum^{K-1}_{i=1}U_i\mathbb{P}(C_{i-1}\backslash C_i) +\sum^{K-1}_{i=1}T\Delta_i\mathbb{P}(C_{i-1}\backslash C_i) \nonumber \\
&\leq \sum^{K-1}_{i=1}U_i\mathbb{P}(C_{i-1}\backslash C_i) +\sum^{K-1}_{i=1}T\Delta_i\mathbb{P}(B^c_i \cap B_{i-1} \cap A_i) \nonumber \\
&\hspace{20ex}+\sum^{K-1}_{i=1}T\Delta_i\mathbb{P}(A^c_i \cap C_{i-1}) \label{regret-terms}
\end{align}

\noindent We bound each of the three terms in~\eqref{regret-terms} as follows:




\vspace{0.5ex}
\noindent{\bf First term in~\eqref{regret-terms}}: Under the event $C_i$, each suboptimal arm $j \in \{1,2,...,i\}$ is eliminated on or before round $n_{m_j} = \left\lceil \frac{2\gamma_{m_j}\log(T(K+1)\tilde{\Delta}^2_{m_j})}{\tilde{\Delta}^2_{m_j}} \right\rceil$. 
Among these arms, let $j_1, j_2, ..., j_k$ denote the sequence of suboptimal arms played where $x_{j_1} > ... > x_{j_k}$, and arm $j_i$ is eliminated in stage $m'_{j_i} \leq m_{j_i}$. 
Let $B \triangleq 2\gamma\log\left(T(K+1)\frac{\Delta^2_{\max}}{4}\right)$. We then have 
\vspace{-1ex}
\small{
\begin{align*}
U_i& \leq \Delta_{j_1} n_{m'_{j_1}} + \sum^k_{i=2} \Delta_{j_i} (n_{m'_{j_i}}-n_{m'_{j_{i-1}}})\\
\leq& \Delta_{j_1} n_{m'_{j_1}} + \sum^k_{i=2}\Delta_{j_i} \Bigg(1+ \frac{2\gamma\log(T(K+1)\tilde{\Delta}^2_{m_{j_i}})}{\tilde{\Delta}^2_{m_{j_i}}}\\
&\hspace{25ex}-\frac{2\gamma\log(T(K+1)\tilde{\Delta}^2_{m_{j_{i-1}}})}{\tilde{\Delta}^2_{m_{j_{i-1}}}} \Bigg)  \\
\leq& \Delta_{j_1} n_{m'_{j_1}}+\sum^k_{i=2} \Delta_{j_i} + 4B\sum^k_{i=2}\tilde{\Delta}_{m_{j_i}} \left(\frac{1}{\tilde{\Delta}^2_{m_{j_i}}}- \frac{1}{\tilde{\Delta}^2_{m_{j_{i-1}}}}\right)  \\
\leq& \Delta_{j_1} n_{m'_{j_1}}+\sum^k_{i=2} \Delta_{j_i} + 4B\sum^k_{i=2} \frac{1.5\tilde{\Delta}_{m_{j_{i-1}}}(\tilde{\Delta}_{m_{j_{i-1}}}-\tilde{\Delta}_{m_{j_i}})}{\tilde{\Delta}_{m_{j_i}}\tilde{\Delta}^2_{m_{j_{i-1}}}} \\
=& \Delta_{j_1} n_{m'_{j_1}}+\sum^k_{i=2} \Delta_{j_i} + 6B\sum^k_{i=2} \left(\frac{1}{\tilde{\Delta}_{m_{j_i}}}-\frac{1}{\tilde{\Delta}_{m_{j_{i-1}}}}\right) \\
\leq& \sum^k_{i=1} \Delta_{j_i} + 6B \frac{1}{\tilde{\Delta}_{m_{j_k}}} \leq \sum^k_{i=1} \Delta_{j_i} + 24B \frac{1}{\Delta_{j_k}}
\end{align*}
}

\noindent Therefore, $\sum^{K-1}_{i=1}U_i\mathbb{P}(C_{i-1}\backslash C_i) \leq \sum^{K-1}_{i=1} \Delta_i + 24B \frac{1}{\Delta_{\min}}$.

\vspace{0.5ex}
\noindent{\bf Second term in~\eqref{regret-terms}}: Under the event $B^c_i \cap B_{i-1} \cap A_i$, the optimal arm is not eliminated by $m_i$, neither does arm $i$. We first note that if $\overline{l}_{i} \geq l_i - c_{m_i}$ and $\overline{l}_K \leq l_K + c_{m_i}$ hold, then arm $i$ will be eliminated in round $m_i$. 
Indeed, from the definitions of $c_m$ and $n_m$, we have $c_{m_i} \leq \frac{\tilde{\Delta}_{m_i}}{2\sqrt{\gamma_{m_i}}} = \frac{\tilde{\Delta}_{m_{i+1}}}{{\sqrt{\gamma_{m_i}}}} < \frac{\Delta_i}{4\sqrt{\gamma_{m_i}}}$. Then from Lemma~\ref{lem:1}, we have
\begin{align*}
\overline{l}_{i} - x_i\overline{\lambda}_{m_i} \geq&\overline{l}_i - x_i(\lambda^*+2c_{m_i}/x_K)\\
\geq& l_i - x_i(\lambda^*+2c_{m_i}/x_K)- c_{m_i}\\
=& l_K-x_K\lambda^*+ \Delta_i - 2\frac{x_i}{x_K}c_{m_i} - c_{m_i}\\
>& l_K-x_K\lambda^*+4\sqrt{\gamma_{m_i}}c_{m_i} - 2\frac{x_i}{x_K}c_{m_i} - c_{m_i}\\
\geq& \overline{l}_K-x_K\lambda^* + 2\left(1+\frac{x_K}{x_{(1)}}\right)c_{m_i}\\
\geq& \overline{l}_K-x_K\overline{\lambda}_{m_i} + 2\left(1+\frac{x_K}{x_{(1)}}\right)c_{m_i}
\end{align*}
\noindent where $x_{(1)}$ is the minimum active period in stage $m_i$. It follows that arm $i$ is eliminated in stage $m_i$ as claimed.

It follows that $\mathbb{P}(B^c_i \cap B_{i-1} \cap A_i) \leq \mathbb{P}(\overline{l}_{i} < l_i - c_{m_i}) + \mathbb{P}(\overline{l}_{i^*} > l_{i^*} + c_{m_i}) \leq \frac{1}{T(K+1)\tilde{\Delta}^2_{m_i}} + \frac{1}{T(K+1)\tilde{\Delta}^2_{m_i}} \leq \frac{1}{T\tilde{\Delta}^2_{m_i}}$ by the Chernoff-Hoeffding bound. 
Therefore, the second term in~\eqref{regret-terms} can be bounded by $\sum_i T\Delta_i \frac{1}{T\tilde{\Delta}^2_{m_i}} \leq \sum_i \frac{16}{\Delta_i}$.

\vspace{0.5ex}
\noindent{\bf Third term in~\eqref{regret-terms}}: Under the event $A^c_i \cap C_{i-1}$, every arm $j \in \{1,2,...i-1\}$ has been eliminated by stage $m_j$ and the optimal arm is eliminated by some arm $k \geq i$ in some stage $m_*$ where $m_{i-1}<m_* \leq m_i$. We first claim that if $\overline{l}_{k} \geq l_k - c_{m_*}$ and $\overline{l}_{K} \leq l_{K} + c_{m_*}$ hold, then the optimal arm is {\it not} eliminated by arm $k$ in stage $m_*$. To see this, assume that the optimal arm is eliminated, which happens only when $\overline{l}_{K} -x_{K}\overline{\lambda}_{m_*} \geq \overline{l}_k - x_k\overline{\lambda}_{m_*} + 2\left(1+\frac{x_k}{x_{(1)}}\right)c_{m_*}$. From Lemma~\ref{lem:1}, we have:
\begin{align*}
l_k - x_k\lambda^* \leq&\overline{l}_k +c_{m_*} - x_k(\overline{\lambda}_{m_*}-2c_{m_*}/x_{K})\\
\leq& \overline{l}_k -x_k\overline{\lambda}_{m_*}+ c_{m_*}+2\frac{x_k}{x_{K}}c_{m_*}\\
\leq& \overline{l}_k -x_k\overline{\lambda}_{m_*} +2\left(1+\frac{x_k}{x_{(1)}}\right)c_{m_*}-c_{m_*} \\
\leq& \overline{l}_{K} -x_{K}\overline{\lambda}_{m_*} -c_{m_*} \\
\leq& l_{K} -x_{K}\lambda^*
\end{align*}
\noindent
which contradicts the fact that $k$ is suboptimal. It follows that the probability that the optimal arm is eliminated by a fixed arm $k \geq i$ in a fixed stage $m_* \leq m_i$ is bounded by $\mathbb{P}(\overline{l}_{k} < l_k - c_{m_*})+ \mathbb{P}(\overline{l}_{K} > l_{K} + c_{m_*}) \leq \frac{1}{T\tilde{\Delta}^2_{m_{*}}}$ by the Chernoff-Hoeffding bound. Therefore, the third term in~\eqref{regret-terms} is bounded by
\begin{align*}
&\sum^{K-1}_{i=1} \sum^{m_i}_{m_*=m_{i-1}+1} \sum^{K-1}_{k = i}  \frac{1}{T\tilde{\Delta}^2_{m_*}} T\Delta_i \\
&= \sum^{\max_i m_i}_{m_*=0} \sum_{k: m_k \geq m_{*}} \frac{1}{T\tilde{\Delta}^2_{m_*}} T\max_{h: m_h \geq m_*}\Delta_h \\
&\leq \sum^{\max_i m_i}_{m_*=0} \sum_{k: m_k \geq m_*} \frac{1}{\tilde{\Delta}^2_{m_*}} 4 \tilde{\Delta}_{m_*} \\
&= \sum^{K-1}_{i=1} \sum^{m_i}_{m_*=0} \frac{4}{\tilde{\Delta}_{m_*}} \leq \sum^{K-1}_{i=1} 4 \cdot 2^{m_i+1} \leq \sum^{K-1}_{i=1} \frac{32}{\Delta_i}
\end{align*}

Putting all the three cases together, we get the desired regret bound.
\end{proof}

\begin{remark}
Our algorithm achieves a regret where the coefficient of the $\log(T)$ term is independent of $K$, the number of arms. This is obtained by utilizing the side observations among arms. In contrast, 
 a direct application of the UCB based policy for time-associative bandits in~\cite{continuous-time-bandit} to our problem leads to a regret of $O\left(\sum^K_{i=1}\frac{\gamma\log (T(K+1))}{\Delta^2_i}\right)$, where the $\log(T)$ term has a coefficient that is linear of $K$. 
\end{remark}


\begin{remark}
In our numerical study, we also consider a variant of Algorithm~\ref{alg:stochastic-finite} where in arm elimination phase, we delate all the periods $x_i \in X_m$ such that $\overline{l}_{i}-x_i\overline{\lambda}_{m} \geq \min_{x_j \in X_m}\overline{l}_{j}-x_j\overline{\lambda}_{m}+4c_m$. By using a smaller confidence interval, this variant eliminates suboptimal arms more aggressively than Algorithm~\ref{alg:stochastic-finite}. Although we are not able to prove a regret bound for this variant, it exhibits even better performance than Algorithm~\ref{alg:stochastic-finite} in our numerical study.
\end{remark}

\subsection{Continuous Defense Periods}
We next consider the case where the defense periods can take any real value in $X = [x_{\min}, x_{\max}]$. Note that in this case, the bound given in Theorem~\ref{thm:finite} can be very poor due to the large $K$ and small $\Delta_i$. Built upon Algorithm~\ref{alg:stochastic-finite}, we propose a new policy with a regret that is independent of $K$ and $\Delta_i$ under the following assumption. 
Let $l(x) = E_{a_1}(l(x,a_1))$ denote the expected loss when a period $x$ is played. We assume that $l(x)$ is Lipschitz continuous: there exists a constant $L \geq 0$ such that for any $x_1, x_2 \in X$, $|l(x_1)-l(x_2)| \leq L|x_1-x_2|$. For instance, when the attack time follows a uniform distribution in $[a_1,a_2]$, and $f(\cdot)$ is binary, we have $|l(x_1)-l(x_2)| \leq \frac{1}{a_2-a_1} |x_1-x_2|$, and we can take $L = \frac{1}{a_2-a_1}$. 

Our algorithm is inspired by UCB for continuous bandits (UCBC)~\cite{Auer-continuum}. We first divide $X$ into $n$ subintervals of equal length, where $n$ is a parameter to be determined (see Algorithm~\ref{alg:stochastic-infinite}). Let $x_k \triangleq x_{\min}+k\frac{x_{\max}-x_{\min}}{n}$ denote the longest period in the $k$-th interval. We then apply Algorithm~\ref{alg:stochastic-finite} to the set of arms $I \triangleq \{x_1,...,x_n\}$. 


\begin{algorithm}
\caption{Improved UCB based optimal timing with continuous periods}\label{alg:stochastic-infinite}
\begin{algorithmic}
\State {\bf Input:} A set of periods $X = [x_{\min},x_{\max}]$, the number of rounds $T$, the number of subintervals $n$.
\State {\bf Initialization:} For $k = 1, 2, ..., n$, $x_k = x_{\min}+k\frac{x_{\max}-x_{\min}}{n}$.
\State Apply Algorithm~\ref{alg:stochastic-finite} to $I = \{x_1,x_2...,x_n\}$.
\end{algorithmic}
\end{algorithm}

Define $I_1 \triangleq [x_{\min},x_1]$ and $I_k \triangleq (x_{k-1},x_k]$ for $1 < k \leq n$. For any $x \in X$, let $\Delta(x) \triangleq l(x)-x\lambda^*$ denote the relative loss of $x$. Among $x_k \in I$, assume $x_{k^*}$ has the minimum $\lambda(x_k)$. Let $\Delta'(x) \triangleq l(x)-x\lambda(x_{k^*})$ denote the relative loss of arm $x$ with respect to $x_{k^*}$. It is clear that $\Delta'(x) \leq \Delta(x)$. 
We further have the following property about $\Delta$ and $\Delta'$.
\begin{lemma}\label{lem:infinite}
$\Delta(x_{k^*}) \leq L'n^{-1}$ and $\Delta(x)-\Delta'(x) \leq L'n^{-1}$, where $L' = L\frac{x_{\max}(x_{\max}-x_{\min})}{x_{\min}}$.
\end{lemma}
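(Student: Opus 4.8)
The plan is to reduce both statements to a single estimate, namely that the best grid point $x_{k^*}$ is nearly as good as the true optimum in the time-average sense: $\lambda(x_{k^*}) \le \lambda^* + \frac{L(x_{\max}-x_{\min})}{x_{\min} n}$. Once this is in hand, both claimed inequalities follow by a one-line substitution, since $\Delta(x_{k^*}) = x_{k^*}\big(\lambda(x_{k^*})-\lambda^*\big)$ and, for any $x\in X$, $\Delta(x)-\Delta'(x) = x\big(\lambda(x_{k^*})-\lambda^*\big)$, and both $x_{k^*}$ and $x$ are at most $x_{\max}$.

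First I would fix $x^*\in X$ attaining $\lambda^*=l(x^*)/x^*=\min_{x\in X}\lambda(x)$ and locate the subinterval $I_j$ of the partition $\{I_1,\dots,I_n\}$ that contains $x^*$. Writing $\delta = \frac{x_{\max}-x_{\min}}{n}$ for the common cell length, the \emph{right} endpoint of $I_j$ is the grid point $x_j\in I$, and it satisfies $x^*\le x_j$ and $x_j-x^*\le\delta$. The key point is to use the right endpoint rather than the left: Lipschitz continuity gives $l(x_j)\le l(x^*)+L\delta$, and then
\[
\lambda(x_j)=\frac{l(x_j)}{x_j}\le\frac{l(x^*)}{x_j}+\frac{L\delta}{x_j}\le\frac{l(x^*)}{x^*}+\frac{L\delta}{x_{\min}}=\lambda^*+\frac{L\delta}{x_{\min}},
\]
where the second inequality uses $x_j\ge x^*$ for the first term and $x_j\ge x_{\min}$ for the second. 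Since $x_{k^*}$ minimizes $\lambda(\cdot)$ over $I$ and $x_j\in I$, we get $\lambda(x_{k^*})\le\lambda(x_j)\le\lambda^*+L\delta/x_{\min}$.

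The two conclusions then drop out. For the first, $\Delta(x_{k^*})=l(x_{k^*})-x_{k^*}\lambda^*=x_{k^*}\big(\lambda(x_{k^*})-\lambda^*\big)\le x_{\max}\cdot\frac{L\delta}{x_{\min}}=L'n^{-1}$. For the second, for any $x\in X$ we have $\Delta(x)-\Delta'(x)=x\lambda(x_{k^*})-x\lambda^*=x\big(\lambda(x_{k^*})-\lambda^*\big)\le x_{\max}\cdot\frac{L\delta}{x_{\min}}=L'n^{-1}$, and this quantity is nonnegative because $\lambda(x_{k^*})\ge\lambda^*$, which also re-derives the observation that $\Delta'(x)\le\Delta(x)$.

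The argument is short, so there is no serious obstacle; the only subtlety is the control of the ratio $l(x_j)/x_j$ against $l(x^*)/x^*$. Perturbing the numerator is immediate from the Lipschitz hypothesis, but replacing $x^*$ by a nearby point in the denominator only goes the favorable way when that point is $\ge x^*$ — hence the need to take the grid point to the right of $x^*$. Everything else is substitution together with the uniform bounds $x_{\min}\le x\le x_{\max}$.
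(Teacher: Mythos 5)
Your proof is correct and follows essentially the same route as the paper's: both hinge on taking the \emph{right} endpoint $x_j$ of the cell containing $x^*$ (so that $x_j \ge x^*$ makes the denominator comparison go the right way), applying Lipschitz continuity to the numerator, and absorbing a factor of $x_{\max}/x_{\min}$. Your only departure is organizational --- you funnel both claims through the single estimate $\lambda(x_{k^*}) - \lambda^* \le L\delta/x_{\min}$, whereas the paper bounds $\Delta(x_{k^*})$ via $\frac{x_{\max}}{x_{\min}}\Delta(x_j)$ and treats $\Delta(x)-\Delta'(x)$ by a parallel computation --- but the underlying inequalities are identical.
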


From the lemma, we can establish the following performance bound for Algorithm~\ref{alg:stochastic-infinite}. 
\begin{theorem}\label{thm:infinite}
The expected regret of the variant of the improved UCB policy for continuous bandits described in Algorithm~\ref{alg:stochastic-infinite} is at most $3L'n^{-1}T + \frac{48\gamma\log(T(n+1))}{L'n^{-1}}+\frac{48 n^2}{L'}+n\Delta_{\max}$. By taking $n = T^{1/3}$, we have $\overline{R}_T \leq O(T^{2/3})$.
\end{theorem}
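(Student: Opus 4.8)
The plan is to reduce to the finite-armed analysis of Theorem~\ref{thm:finite} through the discretization, paying a discretization penalty that is controlled by Lemma~\ref{lem:infinite}. Since Algorithm~\ref{alg:stochastic-infinite} never plays any arm outside $I=\{x_1,\dots,x_n\}$, we have $\overline{R}_T=\mathbb{E}\big[\sum_{t=1}^{T}\Delta(x_t)\big]$ with every $x_t\in I$. Write $\Delta(x_t)=\Delta'(x_t)+\big(\Delta(x_t)-\Delta'(x_t)\big)$, where by Lemma~\ref{lem:infinite} the second summand is at most $L'n^{-1}$ in every round; so the core task is to bound $\sum_t\Delta'(x_t)$, which is exactly the regret of Algorithm~\ref{alg:stochastic-finite} run on the finite instance $I$ whose optimal arm is $x_{k^*}$ and whose relative losses are the $\Delta'(x_i)$.

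First I would split the rounds according to whether the arm played is \emph{good} ($\Delta'(x_t)\le L'n^{-1}$) or \emph{bad} ($\Delta'(x_t)> L'n^{-1}$). On a good round $\Delta(x_t)\le\Delta'(x_t)+L'n^{-1}\le 2L'n^{-1}$, so the good rounds contribute at most $2L'n^{-1}T$ in total; on a bad round $\Delta(x_t)\le\Delta'(x_t)+L'n^{-1}$, and summing the $L'n^{-1}$ part over the at most $T$ bad rounds costs at most $L'n^{-1}T$, which together accounts for the $3L'n^{-1}T$ term. It then remains to bound $\sum_{i:\,\Delta'(x_i)> L'n^{-1}}\Delta'(x_i)\,\mathbb{E}(n_i(T))$. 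The main obstacle is that one cannot invoke Theorem~\ref{thm:finite} as a black box, because $\min_i\Delta'(x_i)$ over all of $I$ can be arbitrarily small, rendering that bound vacuous. The remedy is to re-run the proof of Theorem~\ref{thm:finite} with the effective gap floored at $L'n^{-1}$: arms with gap below $L'n^{-1}$ have already been charged to the good bucket, while the telescoping estimate in the ``first term'' of that proof---the step responsible for making the $\log T$ coefficient independent of the number of arms, which must be preserved, and which relies on the side-observation structure that Algorithm~\ref{alg:stochastic-infinite} inherits by calling Algorithm~\ref{alg:stochastic-finite}---goes through verbatim when the sorted list of arms is restricted to those with gap $\ge L'n^{-1}$, with $K$ replaced by $n$ and $\Delta_{\min}$ by $L'n^{-1}$. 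One also uses that $x_{k^*}\in I$ so the events $A_i,B_i,C_i$ of that proof remain well defined, and that $\Delta'(x_i)\le\Delta(x_i)\le\Delta_{\max}\le 1$, so $\log\!\big(T(n+1)\Delta_{\max}^2/4\big)\le\log(T(n+1))$. Since there are at most $n$ bad arms, the ``$48\gamma\log/\Delta_{\min}$'' term becomes $\tfrac{48\gamma\log(T(n+1))}{L'n^{-1}}$, the $\sum 48/\Delta'(x_i)$ term becomes at most $\tfrac{48n^2}{L'}$, and the $\sum\Delta'(x_i)$ term becomes at most $n\Delta_{\max}$.

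Adding the four contributions yields $\overline{R}_T\le 3L'n^{-1}T+\tfrac{48\gamma\log(T(n+1))}{L'n^{-1}}+\tfrac{48n^2}{L'}+n\Delta_{\max}$. Finally, with $n=T^{1/3}$ the first and third terms are $\Theta(T^{2/3})$, the second is $\Theta(T^{1/3}\log T)=o(T^{2/3})$, and the last is $\Theta(T^{1/3})$, hence $\overline{R}_T=O(T^{2/3})$. I expect the only delicate point to be the bookkeeping in the re-run of Theorem~\ref{thm:finite} over the restricted arm set while keeping the $\log T$ coefficient free of $n$ (and tracking the constants $16$, $32$ in its ``second'' and ``third'' terms down to the claimed $48n^2/L'$); the rest is routine.
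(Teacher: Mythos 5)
Your proposal is correct and follows essentially the same route as the paper: split the arms by whether their gap exceeds a threshold of order $L'n^{-1}$, charge the small-gap arms $O(L'n^{-1}T)$ via Lemma~\ref{lem:infinite}, and bound the large-gap arms by the finite-arm analysis with the effective gap floored at $L'n^{-1}$, then set $n=T^{1/3}$. The one place you are more careful than the paper is in noting that Theorem~\ref{thm:finite} cannot be invoked verbatim (since $\min_k \Delta'(x_k)$ over all of $I$ may be tiny) and that one needs the per-arm/restricted form of its bound --- the paper simply writes the restricted bound $\frac{48\gamma\log(T(n+1))}{\min_{k\in S}\Delta'(x_k)}+\sum_{k\in S}\bigl(\Delta'(x_k)+\frac{48}{\Delta'(x_k)}\bigr)$ as if it followed directly, so your explicit justification of that step is a welcome addition rather than a deviation.
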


Proofs of Lemma~\ref{lem:infinite} and Theorem~\ref{thm:infinite} are provided in \iftp the Appendix.\else \cite{technical-report}.\fi







\begin{figure}[t]
\centering
\subfigure[binary loss]{
\label{Fig.binary}
\includegraphics[width=1.6in]{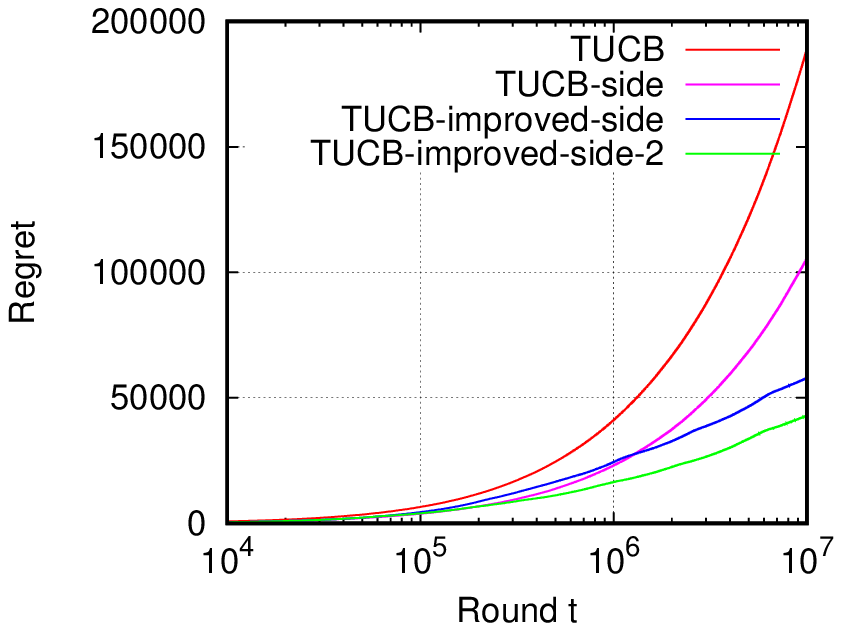}}
\subfigure[linear loss]{
\label{Fig.linear}
\includegraphics[width=1.6in]{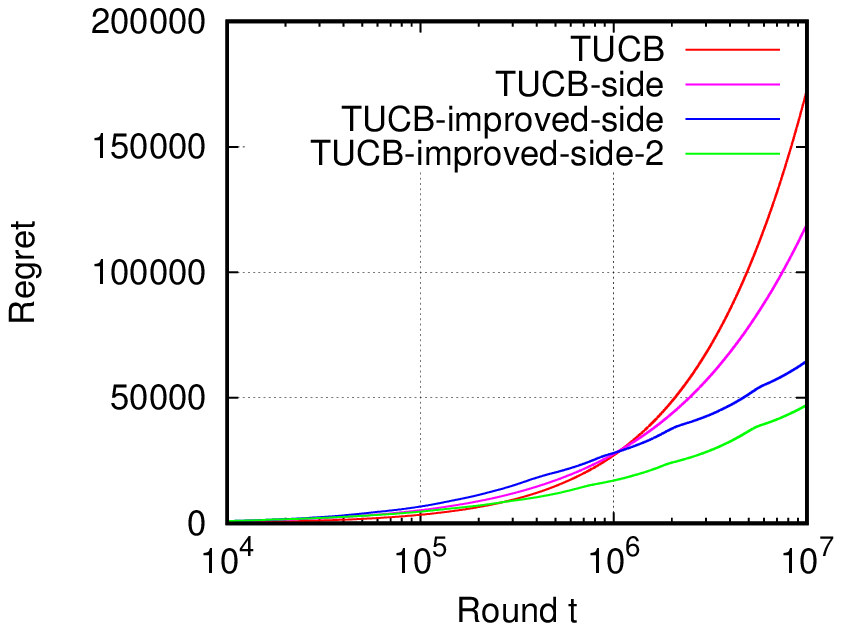}}
\caption{\small Numerical Results.}
\label{Fig.result}
\end{figure}

\section{Numerical Results}
In this section, we demonstrate the advantages of our learning algorithms through numerical study. We use the following synthetic dataset. We assume that the attack time $a_t$ follows an $i.i.d.$ Weibull Distribution with CDF $F(a) = 1-e^{-(a/\lambda)^b}$ for $a \geq 0$ and $F(a) = 0$ for $a<0$. This model has been used in reliability engineering~\cite{Bayesian-replacement} and cybersecurity~\cite{Yue-2016} to model failure times. Note that when $b = 1$, the Weibull Distribution becomes the exponential distribution. By setting $b>1$, the model indicates that the failure rate increases with time. We set $b=2$ in experiments. In each trial, $\lambda$ is chosen from the interval $[1,20]$ uniformly at random. We consider a 19 arm setting with $x_i$ evenly distributed in $[1,10]$ with a step size of 0.5. We consider both the binary loss function and the linear loss function mentioned in the model section. In both cases, we fix the defense cost to $c_d = 0.1$. With these parameter settings, we observe that the best arm varies over the feasible defense periods when we vary $\lambda$.

We focus on the case where side observations are available (without attack cost) and compare our algorithms with the UCB based time-associative bandit algorithm in~\cite{continuous-time-bandit} (TUCB) that do not consider side observations. We further consider a variant of TUCB that uses the TUCB policy to choose the arm to play in each round and obtains side-observations after each play (TUCB-side). This algorithm can be considered as the application of the UCB-N policy and the UCB-MaxN policy in~\cite{side-stochastic-12} to the time associative bandit model (UCB-N and UCB-MaxN give the same policy under the dependence structure we consider.) For our algorithms, we evaluate both Algorithm 1 (TUCB-improved-side) and its variant discussed above (TUCB-improved-side-2). The results are averaged over 100 independent trials and are given in Figure~\ref{Fig.result}. We note that the linear loss setting represents the harder case since it introduces smaller variances across arms. We observe that for both loss functions, our algorithms can significantly reduce long-term regrets compared to TUCB and TUCB-side by carefully incorporating side observations. Moreover, TUCB-improved-side-2 achieves the best performance among the four algorithms. 

\section{Acknowledgments}
This research was supported in part by a grant from the Army Research Office AROW911NF-15-1-0277, and an Army Research Office MURI W911NF-12-1-0385.

\Urlmuskip=0mu plus 1mu\relax
\bibliographystyle{aaai}
\bibliography{ref}

\begin{thebibliography}{}

\bibitem[\protect\citeauthoryear{Auer and Ortner}{2010}]{improved-UCB}
Auer, P., and Ortner, R.
\newblock 2010.
\newblock {UCB revisited: Improved regret bounds for the stochastic multi-armed
  bandit problem}.
\newblock {\em Periodica Mathematica Hungarica} 61(1):55--65.

\bibitem[\protect\citeauthoryear{Auer, Ortner, and
  Szepesv\'{a}ri}{2007}]{Auer-continuum}
Auer, P.; Ortner, R.; and Szepesv\'{a}ri, C.
\newblock 2007.
\newblock {Improved Rates for the Stochastic Continuum-Armed Bandit Problem}.
\newblock In {\em {Proc. of COLT}}.

\bibitem[\protect\citeauthoryear{Balcan \bgroup et al\mbox.\egroup
  }{2015}]{online-learning-SS}
Balcan, N.; Blum, A.; Haghtalab, N.; and Procaccia, A.
\newblock 2015.
\newblock {Commitment Without Regrets: Online Learning in Stackelberg Security
  Games}.
\newblock In {\em {Proc. of EC}}.

\bibitem[\protect\citeauthoryear{Barnett}{2011}]{facebook-account}
Barnett, E.
\newblock 2011.
\newblock {Hackers go after Facebook sites 600,000 times every day}.
\newblock {\it The Telegraph, Oct.29, 2011}.

\bibitem[\protect\citeauthoryear{Beattie \bgroup et al\mbox.\egroup
  }{2002}]{patch-usenix}
Beattie, S.; Arnold, S.; Cowan, C.; Wagle, P.; Wright, C.; and Shostack, A.
\newblock 2002.
\newblock {Timing the Application of Security Patches for Optimal Uptime}.
\newblock In {\em {Proc. of USENIX LISA}}.

\bibitem[\protect\citeauthoryear{Blackwell}{1949}]{games-of-timing}
Blackwell, D.
\newblock 1949.
\newblock {The noisy duel, one bullet each, arbitrary accuracy.}
\newblock {Technical report, The RAND Corporation, D-442}.

\bibitem[\protect\citeauthoryear{Bowers \bgroup et al\mbox.\egroup
  }{2014}]{graph-apt}
Bowers, K.~D.; Dijk, M. E.~V.; Juels, A.; Oprea, A.~M.; Rivest, R.~L.; and
  Triandopoulos, N.
\newblock 2014.
\newblock Graph-based approach to deterring persistent security threats.
\newblock US Patent 8813234.

\bibitem[\protect\citeauthoryear{Bubeck and
  Cesa-Bianchi}{2012}]{Bubeck-bandit-survey}
Bubeck, S., and Cesa-Bianchi, N.
\newblock 2012.
\newblock {Regret Analysis of Stochastic and Nonstochastic Multi-armed Bandit
  Problems}.
\newblock {\em Foundations and Trends in Machine Learning} 5(1):1--122.

\bibitem[\protect\citeauthoryear{Buccapatnam, Eryilmaz, and
  Shroff}{2014}]{Swapna-Sigmetrics2014}
Buccapatnam, S.; Eryilmaz, A.; and Shroff, N.~B.
\newblock 2014.
\newblock {Stochastic bandits with side observations on networks}.
\newblock In {\em {Proc. of Sigmetrics}}.

\bibitem[\protect\citeauthoryear{Caron \bgroup et al\mbox.\egroup
  }{2012}]{side-stochastic-12}
Caron, S.; Kveton, B.; Lelarge, M.; and Bhagat, S.
\newblock 2012.
\newblock {Leveraging Side Observations in Stochastic Bandits}.
\newblock In {\em {Proc. of UAI}}.

\bibitem[\protect\citeauthoryear{Gy\"{o}rgy \bgroup et al\mbox.\egroup
  }{2007}]{continuous-time-bandit}
Gy\"{o}rgy, A.; Kocsis, L.; Szab\'{o}, I.; and Szepesv\'{a}ri, C.
\newblock 2007.
\newblock {Continuous Time Associative Bandit Problems}.
\newblock In {\em {Proc. of IJCAI}}.

\bibitem[\protect\citeauthoryear{Hoeffding}{1963}]{Hoeffding-inequalities}
Hoeffding, W.
\newblock 1963.
\newblock {Probability inequalities for sums of bounded random variables}.
\newblock {\em Journal of the American Statistical Association} 58(301):13--30.

\bibitem[\protect\citeauthoryear{Juels \bgroup et al\mbox.\egroup
  }{2016}]{flipit-patent}
Juels, A.; Dijk, M. E.~V.; Oprea, A.~M.; and Rivest, R.~L.
\newblock 2016.
\newblock {Scheduling of defensive security actions in information processing
  systems}.
\newblock {US Patent 9471777}.

\bibitem[\protect\citeauthoryear{Laszka \bgroup et al\mbox.\egroup
  }{2014}]{FlipThem}
Laszka, A.; Horvath, G.; Felegyhazi, M.; and Butty\'{a}n, L.
\newblock 2014.
\newblock Flipthem: Modeling targeted attacks with flipit for multiple
  resources.
\newblock In {\em Proc. of GameSec}.

\bibitem[\protect\citeauthoryear{Laszka, Johnson, and
  Grossklags}{2013}]{asymmetric-model}
Laszka, A.; Johnson, B.; and Grossklags, J.
\newblock 2013.
\newblock {Mitigating Covert Compromises: A Game-Theoretic Model of Targeted
  and Non-Targeted Covert Attacks}.
\newblock In {\em Proc. of WINE}.

\bibitem[\protect\citeauthoryear{Liu, Zhao, and Swami}{2013}]{probing-13}
Liu, K.; Zhao, Q.; and Swami, A.
\newblock 2013.
\newblock {Dynamic Probing for Intrusion Detection under Resource Constraints}.
\newblock In {\em {Proc. of ICC}}.

\bibitem[\protect\citeauthoryear{Mazzuchi and
  Soyer}{1996}]{Bayesian-replacement}
Mazzuchi, T.~A., and Soyer, R.
\newblock 1996.
\newblock {A Bayesian perspective on some replacement strategies}.
\newblock {\em Reliability Engineering and System Safety} 51(3):295--303.

\bibitem[\protect\citeauthoryear{Perchet and Rigollet}{2013}]{MAB-covariates}
Perchet, V., and Rigollet, P.
\newblock 2013.
\newblock {The multi-armed bandit problem with covariates}.
\newblock {\em The Annals of Statistics} 41(2):693--721.

\bibitem[\protect\citeauthoryear{Puterman}{1994}]{Puterman-MDP}
Puterman, M.~L.
\newblock 1994.
\newblock {\em {Markov Decision Processes: Discrete Stochastic Dynamic
  Programming}}.
\newblock Wiley-Interscience.

\bibitem[\protect\citeauthoryear{Tan and Xia}{2016}]{Yue-2016}
Tan, Y., and Xia, C.~H.
\newblock 2016.
\newblock {Cyber Maintenance Policy Optimization via Adaptive Learning}.
\newblock In {\em {Proc. of Infocom}}.

\bibitem[\protect\citeauthoryear{van Dijk \bgroup et al\mbox.\egroup
  }{2013}]{flipit}
van Dijk, M.; Juels, A.; Oprea, A.; and Rivest, R.~L.
\newblock 2013.
\newblock {FlipIt: The Game of ``Stealthy Takeover''}.
\newblock {\em Journal of Cryptology} 26(4):655--713.

\end{thebibliography}

\iftp

\section{Appendix}
\subsection{Proof of Lemma 3}

Assume $\lambda^* = \lambda(x^*)$ for $x^* \in I_j$. By the definition of $k^*$, we have
$\lambda(x_{k^*}) = \frac{l(x_{k^*})}{x_{k^*}} \leq \frac{l(x_j)}{x_j} = \lambda(x_j)$. It follows that
\begin{align*}
\Delta(x_{k^*}) &= l(x_{k^*}) - x_{k^*}\lambda^* \\
&\leq l(x_j) \frac{x_{k^*}}{x_j} - x_{k^*} \lambda^* \\
&= \frac{x_{k^*}}{x_j}(l(x_j) - x_j \lambda^*) \\
&\leq \frac{x_{\max}}{x_{\min}}\Delta(x_j).
\end{align*}
Moreover,
\begin{align*}
\Delta(x_j) &= l(x_j) - x_j \lambda^* \\
&= l(x_j) - x_j \frac{l(x^*)}{x^*} \\
&\leq l(x_j) - l(x^*) \\
&\leq L|x_j-x^*| \\
&\leq L\frac{x_{\max}-x_{\min}}{n}
\end{align*}
Therefore, $\Delta(x_{k^*}) \leq L\frac{x_{\max}}{x_{\min}}\frac{x_{\max}-x_{\min}}{n}$. Similarly, $\Delta(x)-\Delta'(x) = x(\lambda(x_{k^*})-\lambda(x^*)) \leq x(\lambda(x_j)-\lambda(x^*)) = x(\frac{l(x_j)}{x_j}-\frac{l(x^*)}{x^*}) \leq \frac{x}{x^*}(l(x_j)-l(x^*)) \leq \frac{x_{\max}}{x_{\min}}L\frac{x_{\max}-x_{\min}}{n}$.

\subsection{Proof of Theorem 2}
We split the set of intervals into two parts. 
Define $S \triangleq \{k \in I: \Delta(x_k) > 2L'n^{-1}\}$. We have $\overline{R}_T = \sum^n_{k=1} \Delta(x_k) E(n_k(T))$ where $n_k(T)$ denotes the number of times that $x_k$ is played. Let $\overline{R}_T = \overline{R}_{T,1} + \overline{R}_{T,2}$, where $\overline{R}_{T,1} = \sum_{k \not \in S}\Delta(x_k) E(n_k(T))$ and $\overline{R}_{T,2} = \sum_{k \in S} \Delta(x_k) E(n_k(T))$. It is easy to see that $\overline{R}_1 \leq 2L' n^{-1} T$. On the other hand, we have
\begin{align*}
\overline{R}_{T,2} &= \sum_{k \in S} \Delta(x_k) E(n_k(T)) \\
&\leq \sum_{k \in S} (\Delta'(x_k)+L'n^{-1}) E(n_k(T)) \\
&\leq \sum_{k \in S}\Delta'(x_k)E(n_k(T)) + L'n^{-1}T
\end{align*}
By Theorem 1, we have $\sum_{k \in S}\Delta'(x_k)E(n_k(T)) \leq \frac{48\gamma\log(T(n+1))}{\min_{k \in S} \Delta'(x_k)}+\sum_{k \in S}\left(\Delta'(x_k)+\frac{48}{\Delta'(x_k)}\right)$. By Lemma 3, $\Delta'(x_k) \geq \Delta(x_k) - L'n^{-1} \geq L'n^{-1}$ for any $k \in S$. It follows that, $\overline{R}_T \leq 3L'n^{-1}T + \frac{48\gamma\log(T(n+1))}{L'n^{-1}}+\frac{48 n^2}{L'}+n\Delta_{\max}$. By taking $n = T^{1/3}$, we have $\overline{R}_T \leq O(T^{2/3})$.

\subsection{Costly Attacks}
Our model and solutions can be extended to a myopic attacker with a hidden attack cost $c_a$. After observing the defender's move in the beginning of round $t$, the attacker attacks immediately if $\mathbb{E}_{a_t}[(x_t-a_t)^+]> c_a$, and does not attack in that round otherwise
That is, it attacks only if its expected benefit is larger than the attack cost. Equivalently, we can assume there is a period $x_0$ such that the attacker attacks only when $x_t > x_0$. We can further distinguish the following two cases.

\begin{itemize}
\item{\bf Fixed Attack Cost:} In this case, there is a fixed $x_0$ that is unknown to the defender such that there is no attack in round $t$ if $x_t \leq x_0$ is played (thus the defender only suffers from the defense cost). The defender's loss in round $t$ is defined as:
\begin{equation}
l(x_t,a_t) = \left\{
\begin{array}{ll} \label{eq:loss2}
c_d & \text{if } x_t \leq x_0,\\
f[(x_t-a_t)^+]+c_d & \text{if } x_t > x_0.
\end{array} \right.
\end{equation}
Note that by setting $x_0 \leq x_{\min}$, this case reduces to the setting when there is no attack cost.  
\vspace{0.5ex}
\item{\bf Random Attack Cost:} In this case, $x_0$ is $i.i.d.$ sampled in each round from an unknown distribution. The defender's loss in round $t$ is same as~\eqref{eq:loss2}.
\end{itemize}

When there is a fixed attack cost, playing a longer period $x_i$ provides side-observation to a shorter arm $x_j$ only when $x_j > x_0$. Since $x_0$ is unknown, it is insufficient to only play the longest period in each round as we did before. Therefore, we modify Algorithm~\ref{alg:stochastic-finite} by maintaining a set of periods $Y_m$ that the defender knows to be longer than $x_0$ by stage $m$. We set $Y_0 = \emptyset$. Since $x_0$ is fixed, whenever an attack is observed when playing a period $x_i$, we know $x_j > x_0$ for all $x_j \geq x_i$. 
In each stage $m$, for each active period in $Y_m$, the algorithm plays the longest one only as before to exploit side-observations. However, each active period not in $Y_m$ requires further exploration. Therefore, they are also played the same number of times in each stage. But whenever an attack is observed on $x_i$, all $x_j \geq x_i$ are added to $Y_m$ and they don't need to be played separately any further. We can prove the following regret bound for the modified algorithm.

\begin{theorem}\label{thm:unknow-cost}
The expected regret of the above algorithm is at most $\sum_{i: x_i \leq x_0}\frac{B_i}{\Delta_i}+ \sum_{i: x_i > x_0}\min\left(\frac{B_i}{\Delta_i},\frac{\Delta_i}{p_i}\right)+ \frac{48\gamma\log\left(T(K+1)\frac{\Delta^2_{\max}}{4}\right)}{\Delta_{\min}}+\sum_{i: \Delta_i>0}\left(\Delta_i+\frac{48}{\Delta_i}\right)$, where $B_i = 32\gamma\log\left(T(K+1)\frac{\Delta^2_i}{4}\right)$, and $p_i \triangleq \mathbb{P}(l(x_i,a_t)>0)$. 
\end{theorem}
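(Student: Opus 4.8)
The plan is to split the regret of the modified algorithm into a \textbf{core} part, coming from plays of the longest active period in each stage, and a \textbf{separate-exploration} part, coming from the extra plays of periods not yet in $Y_m$. Call a play of arm $x_i$ a core play if it is made in a stage $m$ with $x_i=\max X_m$, and a separate play if it is made in a stage in which $x_i$ is active, $x_i\notin Y_m$, and $x_i\neq\max X_m$; every play of every arm is of exactly one type. Writing $N_i$ for the number of separate plays of $x_i$ and $n_i^{\mathrm{c}}(T)$ for the number of core plays, $\overline{R}_T=\sum_i\Delta_i\mathbb{E}(n_i^{\mathrm{c}}(T))+\sum_i\Delta_i\mathbb{E}(N_i)$, since $\Delta_{i^*}=0$.

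First I would show the core part is controlled by Theorem~\ref{thm:finite} essentially verbatim. The sequence of core plays is exactly the play sequence that Algorithm~\ref{alg:stochastic-finite} would produce on these $K$ arms: in stage $m$ the longest active period is played $n_m-n_{m-1}$ times. The one thing to recheck is that at each arm-elimination step every still-active arm $x_i$ has an average loss $\overline{l}_i$ built from $n_m$ valid i.i.d.\ samples. This holds because any $x_j\in Y_m$ is known to satisfy $x_j>x_0$, so each time the (at least as long) active period $\max X_m$ is played the defender either observes $a_t$ and can compute $l(x_j,a_t)$, or observes no attack, whence $a_t\geq x_j$ and $l(x_j,a_t)=c_d$, giving a valid side-observation in either case; and any active arm not in $Y_m$ is itself played $n_m-n_{m-1}$ times that stage. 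Since extra observations can only sharpen the estimates and hence only speed up eliminations, Lemma~\ref{lem:1} and the Chernoff--Hoeffding estimates apply unchanged, and the argument of Theorem~\ref{thm:finite} gives $\sum_i\Delta_i\mathbb{E}(n_i^{\mathrm{c}}(T))\leq\frac{48\gamma\log(T(K+1)\Delta_{\max}^2/4)}{\Delta_{\min}}+\sum_{i:\Delta_i>0}(\Delta_i+48/\Delta_i)$, the last two terms of the claim.

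Next I would bound $\sum_i\Delta_i\mathbb{E}(N_i)$. Fix a suboptimal $x_i$ and condition on the clean event $\mathcal{G}$ that all empirical losses lie within their confidence radii $c_m$ over the relevant stages, which by the analysis behind Theorem~\ref{thm:finite} forces the optimal arm never to be eliminated and $x_i$ to be eliminated by stage $m_i=\min\{m:\tilde{\Delta}_m<\Delta_i/2\}$; on $\mathcal{G}^{\mathrm{c}}$ the crude bound $N_i\leq T$ together with $\mathbb{P}(\mathcal{G}^{\mathrm{c}})=O(1/(T\tilde{\Delta}_{m_i}^2))$ contributes only an $O(1/\Delta_i)$ term, of the same form as the bad-event terms already counted in Theorem~\ref{thm:finite}. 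On $\mathcal{G}$, $N_i\leq n_{m_i}$, and since $\tilde{\Delta}_{m_i}\in(\Delta_i/4,\Delta_i/2)$ and $\gamma_{m_i}\leq\gamma$ this gives $\Delta_iN_i\leq\Delta_i+B_i/\Delta_i$ with $B_i=32\gamma\log(T(K+1)\Delta_i^2/4)$. If $x_i\leq x_0$ no attack is ever observed on $x_i$, so it never enters $Y_m$ and this is the only available bound, producing $\sum_{i:x_i\leq x_0}B_i/\Delta_i$. If $x_i>x_0$, each separate play of $x_i$ reveals an attack (yields $l(x_i,a_t)>0$) independently with probability $p_i$, and the first such revelation adds $x_i$ (and all longer arms) to $Y_m$ permanently, so $N_i\leq G_i$ with $G_i$ geometric of parameter $p_i$; thus on $\mathcal{G}$ we have $N_i\leq\min(n_{m_i},G_i)$, and $\mathbb{E}\min(A,B)\leq\min(\mathbb{E}A,\mathbb{E}B)$, valid once we have conditioned so that $n_{m_i}$ is a deterministic bound, yields $\Delta_i\mathbb{E}(N_i)\leq\min(B_i/\Delta_i,\Delta_i/p_i)$ up to the $\Delta_i$ and $O(1/\Delta_i)$ slack. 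Summing over the two groups of arms and adding the core bound gives the theorem.

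The main obstacle will be making the core/separate split genuinely airtight: one must check that an arm becoming $\max X_m$ while still outside $Y_m$ is not double-played in that stage, that the clean-event probability bounds of Theorem~\ref{thm:finite} are unchanged when non-$Y_m$ arms are fed their own samples, and, the most delicate point, that those extra samples only sharpen $\overline{l}_i$ and $\overline{\lambda}_m$ and hence only accelerate eliminations, so that the stage schedule of Theorem~\ref{thm:finite} remains a valid upper bound on the core process and in particular the optimal arm is still never wrongly eliminated. A minor secondary subtlety is ordering the $\mathbb{E}\min(A,B)\leq\min(\mathbb{E}A,\mathbb{E}B)$ step after conditioning on $\mathcal{G}$, since without that the time-to-elimination is itself random and the clean $\min(B_i/\Delta_i,\Delta_i/p_i)$ form is not immediate.
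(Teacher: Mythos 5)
Your proposal is correct and follows essentially the same route as the paper's (much briefer) argument: the bad-event terms are bounded exactly as in Theorem~\ref{thm:finite}, and the good-event plays are split into the shared plays of the longest period in $Y_m$ (handled by the Theorem~\ref{thm:finite} argument, yielding the last two terms) and the per-arm exploration plays of periods outside $Y_m$, whose expected number is at most $n_{m_i}$ always and at most $1/p_i$ when $x_i>x_0$, yielding the first two terms. The only nit is that your core/separate dichotomy should key on the longest active period \emph{in} $Y_m$ rather than on $\max X_m$ (the two differ when $\max X_m\notin Y_m$, and plays of the longest element of $Y_m$ then fall into neither of your classes), but this is cosmetic and the rest of your accounting matches the paper's.
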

\begin{proof}
We adopt a similar argument as in the proof of Theorem~\ref{thm:finite}. In particular, the last two terms in~\eqref{regret-terms} can be bounded using the same argument. The only difference is in the first term. For every period $x_i < x_0$, we bound its regret up to stage $m_i$ by $\Delta_i n_{m_i} \leq \Delta_i\left\lceil \frac{2\gamma\log(T(K+1)\tilde{\Delta}^2_{m_i})}{\tilde{\Delta}^2_{m_i}} \right\rceil < \Delta_i \left(1+ \frac{32\gamma\log(T(K+1)\frac{\Delta^2_i}{4})}{\Delta^2_i}\right)$, which gives the first term (and part of the last term) in the regret. Next consider a period $x_j > x_0$. Let $p_j \triangleq \mathbb{P}(l(x_j,a_t)>0)$ 
denote the probability that an attack is observed when playing $x_j$. 
From the algorithm, the expected number of rounds until $x_j$ is added to $Y_m$ is bounded by $\min(\frac{1}{p_j},n_{m_j})$, which gives the second term in the regret (and part of the last term). After $x_j$ is added to $Y_m$, it is played only when it becomes the longest active period in $Y$, which gives the third term in the regret using the same argument for the first term of~\eqref{regret-terms} in Theorem~\ref{thm:finite}.
\end{proof}

When $x_0$ is $i.i.d.$ sampled from an unknown distribution, playing longer periods do not provide deterministic side-observations to any shorter arms. This can be addressed by playing every active arm in $X_m$ until round $n_m = \left\lceil \frac{2\gamma\log(T(K+1)\tilde{\Delta}^2_m)}{\tilde{\Delta}^2_m}\right\rceil$ in each stage $m$. The algorithm applies to any time associative stochastic bandit problem with arbitrary $l_t(x_t,a_t)$. By applying a similar argument as in the proof of Theorem~\ref{thm:finite}, the algorithm achieves a regret bound of $\sum^K_{i=1}\frac{32\gamma\log\left(T(K+1)\frac{\Delta^2_i}{4}\right)}{\Delta_i}+\sum_{i: \Delta_i>0}\left(\Delta_i+\frac{48}{\Delta_i}\right)$. 

\fi

\end{document}